%%%%%%%%%%%%%%% PLEASE NOTE THE FOLLOWING STYLE RESTRICTIONS %%%%%%%%%%%%%%%

%%  1. There are no new tags.  Existing LaTeX tags have been formatted to match
%%     the Preprint series style.    
%%
%%  2. You must use \cite in the text to mark your reference citations and 
%%     \bibitem in the listing of references at the end of your chapter. See
%%     the examples in the following file. If you are using BibTeX, please
%%     supply the bst file with the manuscript file.
%% 
%%  3. This macro is set up for two levels of headings (\section and 
%%     \subsection). The macro will automatically number the headings for you.
%%
%%  5. No running heads are to be used for this volume.
%% 
%%  6. Theorems, Lemmas, Definitions, etc. are to be double numbered, 
%%     indicating the section and the occurence of that element
%%     within that section. (For example, the first theorem in the second
%%     section would be numbered 2.1. The macro will 
%%     automatically do the numbering for you.
%%
%%  7. Figures, equations, and tables must be single-numbered. 
%%     Use existing LaTeX tags for these elements.
%%     Numbering will be done automatically.
%%   
%%
%%%%%%%%%%%%%%%%%%%%%%%%%%%%%%%%%%%%%%%%%%%%%%%%%%%%%%%%%%%%%%%%%%%%%%%%%%%%%%%

\documentclass[twoside,leqno,twocolumn]{article}  
\usepackage{ltexpprt_mod}

%\newcommand{\todo}[1]{}  %insert comments for things to do. Uncomment the 
                         %second definition to see the todo comment, uncomment
                         %the first definition to cut them away alltogether

% Maths packages
\usepackage{amsmath}
\usepackage{amssymb}

%Theorems

%\newtheorem{theorem}{Theorem}
\newtheorem{definition}{Definition}[section]
\newtheorem{example}{Example}
\newtheorem{proper}{Property}[section]

\usepackage{thmtools}
\usepackage{thm-restate}

% Drawings
\usepackage{tikz}
\usetikzlibrary{shapes,backgrounds}
\usepackage{verbatim}
\usepackage{pgfplots}
\usepgflibrary{shapes.geometric}
\usetikzlibrary{calc}

% Graphics
\usepackage{sidecap}
\sidecaptionvpos{figure}{c}
\usepackage{subfigure}

\usetikzlibrary{arrows}
\usepackage{color}

% tables
\usepackage{booktabs}

% links
\usepackage{hyperref}

\usepackage{enumerate}
\usepackage[shortlabels]{enumitem}

% footnote
\usepackage{footmisc}

% table
\usepackage{multirow}
\usepackage{array}
\usepackage{tabularx}
\newcolumntype{H}{>{\setbox0=\hbox\bgroup}c<{\egroup}@{}} % to hide columns
\newcolumntype{L}[1]{>{\raggedright\let\newline\\\arraybackslash\hspace{0pt}}m{#1}}
\newcolumntype{C}[1]{>{\centering\let\newline\\\arraybackslash\hspace{0pt}}m{#1}}
\newcolumntype{R}[1]{>{\raggedleft\let\newline\\\arraybackslash\hspace{0pt}}m{#1}}
\newcolumntype{S}[1]{>{\raggedright\let\newline\\\arraybackslash\hspace{0pt}\tiny}m{#1}}

\usepackage{pifont}% http://ctan.org/pkg/pifont

\begin{document}

\title{\Large A Framework to Adjust Dependency Measure Estimates for Chance}
\author{
Simone Romano\thanks{CIS Department, The University of Melbourne, Australia
\texttt{\{simone.romano, vinh.nguyen, baileyj, karin.verspoor\} @unimelb.edu.au}}
\and
Nguyen Xuan Vinh$^*$
\and 
James Bailey$^*$
\and
Karin Verspoor$^*$
}
\date{}

% Cheating
%\linespread{.97	}

\maketitle

%\pagenumbering{arabic}
%\setcounter{page}{1}%Leave this line commented out.
\begin{abstract}

Estimating the strength of dependency between two variables is fundamental for exploratory analysis and many other applications in data mining. For example: non-linear dependencies between two continuous variables can be explored with the Maximal Information Coefficient (MIC); and categorical variables that are dependent to the target class are selected using Gini gain in random forests. Nonetheless, because dependency measures are estimated on finite samples, the interpretability of their quantification and the accuracy when ranking dependencies become challenging. Dependency estimates are not equal to 0 when variables are independent, cannot be compared if computed on different sample size, and they are inflated by chance on variables with more categories. In this paper, we propose a framework to adjust dependency measure estimates on finite samples. Our adjustments, which are simple and applicable to any dependency measure, are helpful in improving interpretability when quantifying dependency and in improving accuracy on the task of ranking dependencies. In particular, we demonstrate that our approach enhances the interpretability of MIC when used as a proxy for the amount of noise between variables, and to gain accuracy when ranking variables during the splitting procedure in random forests.

\end{abstract}

\section{Introduction}

Dependency measures $\mathcal{D}(X,Y)$ are employed in data mining to assess the strength of the dependency between two continuous or categorical variables $X$ and $Y$. If the variables are continuous, we can use Pearson's correlation to detect linear dependencies, or use more sophisticated measures, such as the Maximal Information Coefficient (MIC)~\cite{Reshef2011} to detect \emph{non}-linear dependencies. If the variables are categorical we can use the well known mutual information (a.k.a.\ information gain) or the Gini gain~\cite{Kononenko1995}. Dependency measures are ubiquitously used: to infer biological networks \cite{Reshef2011}, for variable selection for classification and regression tasks~\cite{Guyon2003}, for clustering comparisons and validation~\cite{Romano2015}, as splitting criteria in random forest~\cite{Breiman2001}, and to evaluate classification accuracy~\cite{Witten2011}, to list a few.

Nonetheless, there exist a number of problems when the dependency $\mathcal{D}(X,Y)$ is estimated with $\hat{\mathcal{D}}(\mathcal{S}_n|X,Y)$ on a data sample $\mathcal{S}_n$ of $n$ data points: \emph{a)} even if the population value $\mathcal{D}(X,Y) = 0$ when $X$ and $Y$ are statistically independent, estimates have a high chance to be bigger than 0 when $n$ is finite; \emph{b)} when comparing pairs of variables which share the same fixed population value $\mathcal{D}(X,Y)$, estimates are still dependent on the sample size $n$ and the number of categories of $X$ and $Y$.
%are indepedent the estimated dependency is greater than 0, ii) if we the population value of dependency is fixed the because of finite sample the estimate dependency measure estimates can be high because of chance when the sample size $n$ is small due to missing values, or when categorical variables have many categories.
%Both issues are due to inflated estimates under low statistical significance.
These issues diminish the utility of dependency measures on \emph{quantification} tasks. For example, MIC was proposed in~\cite{Reshef2011} as a proxy of the amount of noise on the functional dependence between $X$ and $Y$: it should ``provide a score that roughly equals the coefficient of determination $R^2$ of the data relative to the regression function'', which is 0 under complete noise and 1 in noiseless scenarios. Nonetheless, MIC is not equal to 0 under complete noise, and MIC values are not comparable if computed on samples of different size $n$ because of the use of different datasets or in the case of variables with missing values:
\begin{example} \label{ex:mic}
Given two uniform and independent variables $X$ and $Y$ in $[0,1]$, the population value of \emph{MIC} is $0$ but the estimates $\mbox{\emph{MIC}}(\mathcal{S}_{20}|X,Y)$ on $20$ data points are higher than $\mbox{\emph{MIC}}(\mathcal{S}_{80}|X,Y)$ on $80$ data points. On average, they achieve the values of $0.36$ and $0.25$ respectively. The user expects this value to be $0$. The following box plots show estimates for $10,000$ simulations.
\includegraphics[scale=.45]{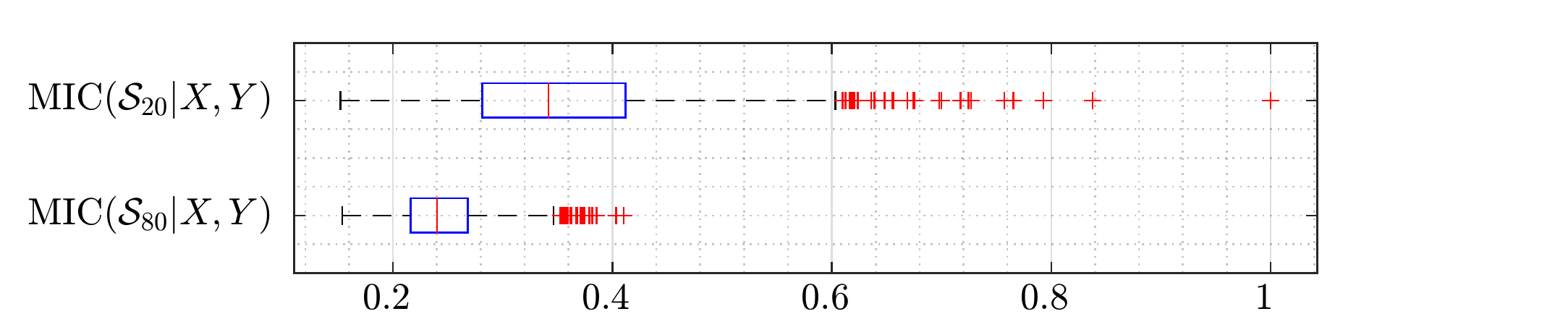}
\end{example}
The example above shows that the estimated MIC does not have zero baseline for finite samples. The zero baseline property is well known in the clustering community~\cite{Romano2015}, nonetheless this property does not hold for many dependency measures used in data mining.

Problems also arise when \emph{ranking} dependencies on a finite data sample. For example, if Gini gain is used to rank the dependency between variables to the target class in random forests~\cite{Breiman2001}, variables with more categories have more chances to be ranked higher:
\begin{example} \label{ex:gini}
Given a variable $X_1$ with two categories and a variable $X_2$ with one more category which are both independent of the target binary class $Y$, both the population value of Gini gain between $X_1$ and $Y$, and the population value between $X_2$ and $Y$ are equal to $0$. However, when Gini gain is estimated on $100$ data points the probability of $\mbox{\emph{Gini}}(\mathcal{S}_{100}|X_2,Y)$ being greater than $\mbox{\emph{Gini}}(\mathcal{S}_{100}|X_1,Y)$ is equal to $0.7$. The user expects $0.5$ given that $X_1$ and $X_2$ are equally unpredictive to $Y$.
% experiment 4.1
\end{example}
It is common practice to use the $p$-value of Gini gain to correct this bias~\cite{Dobra01}. Nonetheless, we will shortly see that $p$-values are effective only when the population value of a dependency measure is $0$. 

In this paper, we identify that the issues discussed in Example~\ref{ex:mic} and~\ref{ex:gini} are due to inflated estimates arising from finite samples. %Moreover, we w show that $p$-values are unbiased only when the population value of the dependency is 0. 
Statistical properties of the distribution of the dependency measure estimator $\hat{\mathcal{D}}(\mathcal{S}_n|X,Y)$ under independence of $X$ and $Y$ can be used to adjust these estimates. The challenge is to formalize a general framework to adjust dependency measure estimates which also addresses the shortcomings of the use of $p$-values. We make the following contributions:
\begin{itemize}[topsep=1ex,itemsep=-1ex,partopsep=1ex,parsep=1ex] % cheating
\item We identify common biases of dependency measure estimates due to finite samples;
\item We propose a framework to adjust estimates $\hat{\mathcal{D}}(\mathcal{S}_n|X,Y)$ which is simple,  yet applicable to many dependency measures because it only requires to use the distribution of the estimator when $X$ and $Y$ are independent;
\item We experimentally demonstrate that our adjustments improve interpretability when quantifying dependency (e.g.,\ when using MIC as a proxy of the amount of noise) and accuracy when ranking dependencies (e.g.,\ when using Gini gain in random forests).
\end{itemize}
%\begin{enumerate}[i),topsep=1ex,itemsep=-1ex,partopsep=1ex,parsep=1ex] % cheating
%\item We discuss about the desiderata of dependency measure estimates: zero baseline under independence of $X$ and $Y$, unbiasedness of estimates on different samples when the population value is fixed;
%\item We propose a framework to adjust $\hat{\mathcal{D}}(\mathcal{S}_n|X,Y)$;
%which is simple yet applicable to many dependency measures because it only requires to compute statistics from the distribution of $\hat{\mathcal{D}}$ under the null hypothesis of independence between $X$ and $Y$;
%\item We adjust established and modern measures such as Pearson's correlation, MIC, and Gini gain;
%\item We experimentally demonstrate that our adjustment improves interpretability when quantifying dependency and accuracy when ranking dependencies. We use random forest~\cite{Breiman2001} as case study.
%\end{enumerate}

\section{Background}

Dependency measures $\mathcal{D}(X,Y)$ are defined on the joint distribution $(X,Y)$. In data mining applications, they are estimated with $\hat{\mathcal{D}}(\mathcal{S}_n|X,Y)$ on a finite sample $\mathcal{S}_n = \{(x_k,y_k)\}$ of $n$ data points. If variables are continuous, we can compute the amount of linear dependency with the squared Pearson's correlation coefficient:
\begin{equation}
r^2(\mathcal{S}_n|X,Y) \triangleq \frac{ \Big( \sum_{k=1}^n (x_k - \bar{x})(y_k - \bar{y}) \Big)^2 }{\sum_{k=1}^n (x_k - \bar{x})^2 \sum_{k=1}^n (y_k - \bar{y})^2}
\end{equation}
with $ \bar{x} = \frac{1}{n}\sum_{k=1}^n x_k$ and $\bar{y} = \frac{1}{n}\sum_{k=1}^n y_k$. If we are interested in \emph{non}-linear relationships, we can employ the Maximal Information Coefficient (MIC)~\cite{Reshef2011}. $\mbox{MIC}(\mathcal{S}_n|X,Y)$ is estimated as the maximum normalized mutual information across all the possible grids superimposed on the sample $\mathcal{S}_n$ to estimate the joint distribution of $X$ and $Y$. When the variables are categorical, the mutual information or $\mbox{Gini}(\mathcal{S}_n|X,Y)$ can be directly estimated using the joint empirical probability distribution between $X$ and $Y$ on the sample $\mathcal{S}_n$.
See Appendix~\ref{app:def} in the supplement for formal definitions.

There are three important applications of dependency measures between two variables~\cite{Reimherr2013}:
\begin{description}[topsep=1ex,itemsep=-1ex,partopsep=1ex,parsep=1ex] % cheating
\item[\textit{\textbf{Detection}}:] Test for the presence of dependency. For example, assess if there exists any dependence between bacterial species that colonize the gut of mammals~\cite{Reshef2011};
% It is useful in case the dependency is not trivial and even its existence is questionable;
\item[\textit{\textbf{Quantification}}:] Summarization of the amount of dependency in an interpretable fashion. For example, when MIC is used as a proxy of the amount of noise in a relationship~\cite{Reshef2011};
%In this case, the presence of dependency has already been established and the target is to provide insights on the amount of dependency;
%In this case the target is to provide insights on the amount of dependency;
\item[\textit{\textbf{Ranking}}:] Sort the relationships of different variables based on the strength of their dependency. For example, when Gini gain is used to rank predictive variables to the target class in random forests~\cite{Breiman2001}.
\end{description}
We saw in Examples~\ref{ex:mic} and~\ref{ex:gini} that when it comes to estimating dependency on data samples via $\hat{\mathcal{D}}(\mathcal{S}_n|X,Y)$ the interpretability of \emph{quantification} and accuracy of \emph{ranking} become challenging. We claim that both tasks can take advantage of the distribution of $\hat{\mathcal{D}}$ under the following null hypothesis:
\begin{definition}
$\hat{\mathcal{D}}_0(\mathcal{S}_n|X,Y)$ is the distribution of $\hat{\mathcal{D}}(\mathcal{S}_n|X,Y)$ on a sample $\mathcal{S}_n$ under the \textbf{null hypothesis} that $X$ is statistically independent of $Y$.
\end{definition}
This null hypothesis is commonly exploited only in \emph{detection} tasks where the distribution $\hat{\mathcal{D}}_0(\mathcal{S}_n|X,Y)$ is computed under the null 
%and the probability of obtaining a value of exactly $\hat{\mathcal{D}}(\mathcal{S}_n|X,Y)$ for a particular sample $\mathcal{S}_n$ or more under the null, i.e.\ the 
and a $p$-value is computed to filter out false discoveries~\cite{Reshef2011}. Nonetheless, this null can be used also to aid \emph{quantification} and \emph{ranking}. The challenges are to identify the distribution under the null for a particular dependency measure, and to employ it in a framework to perform adjustments to the estimates. Here we discuss the use of this null hypothesis in previous research.

\subsection{Use of the Null for Quantification.} \label{sec:adjquant}

To our knowledge the first instance of a systematic approach using the null distribution $\hat{\mathcal{D}}_0$ to achieve interpretability in quantification was proposed in the 1960 with the $\kappa$ coefficient of inter-annotator agreement~\cite{Witten2011}. The amount of agreement $A(\mathcal{S}_n|X,Y)$ (dependency) between two annotators $X$ and $Y$ on a sample of $n$ items can be adjusted for chance by subtracting its expected value $E[A_0(\mathcal{S}_n|X,Y)]$ under the null hypothesis of independence between annotators. The $\kappa$ coefficient is obtained by normalization via division of its maximum value $\max{A} = 1$ to obtain an adjusted dependency measure in the range $[0,1]$:
\begin{equation} \label{eq:kappa}
\kappa(\mathcal{S}_n|X,Y) = \frac{ A(\mathcal{S}_n|X,Y) - E[A_0(\mathcal{S}_n|X,Y)] }{1 - E[A_0(\mathcal{S}_n|X,Y)] }
\end{equation}
Other notable examples are the Adjusted Rand Index (ARI)
%\cite{Hubert85}
and the Adjusted Mutual Information (AMI)~\cite{Romano2015}. We argue that this approach should be applied to many other dependency measures estimators $\hat{\mathcal{D}}$ because it improves interpretability by guaranteeing a zero baseline to $\hat{\mathcal{D}}$. Moreover, we will shortly see that it helps in comparing estimates on different samples $\mathcal{S}_n$.

\subsection{Use of the Null for Ranking.}

In the decision tree community, it is very well known that when selecting the most dependent variable $X$ to the target class $Y$, variables available on a small number of samples $n$ or with many categories tend to be chosen more often. Indeed, it has been shown that an unbiased selection can be obtained if the $p$-value~\cite{Dobra01} of a dependency estimate or its standardized version~\cite{Romano2014} is used rather than its raw value. Nonetheless, these techniques are unbiased only under the null hypothesis and not unbiased in general. Indeed, in the next sections we will see that their use actually yields bias towards variables induced on bigger $n$ or with fewer categories. This behavior has been overlooked in the decision tree community.

\section{Adjusting Estimates for Quantification} \label{sec:willadjquant}

%R\'{e}nyi in 1959 argued that a dependency measure $\mathcal{D}(X,Y)$ should satisfy several properties~\cite{Renyi1959}. Among them are are the requirements to be equal to zero under independence of $X$ and $Y$, and to vary in the range $[0,1]$ to guarantee good \emph{interpretability}: i.e. $\mathcal{D}(X,Y) = 0$ if and only if $X$ and $Y$ are statistically independent, and $\mathcal{D}(X,Y) \in [0,1]$. Even though these properties hold true for the population value of $\mathcal{D}$, the behaviour of the estimates $\hat{\mathcal{D}}$ might vary quite a bit depending on the sample size $n$. For example, 

To guarantee good interpretability in quantification tasks, dependency measure estimates should be equal to 0 on average when $X$ and $Y$ are independent, and their values should be comparable on average across different data samples of different size. More formally we want:
\begin{proper}[Zero Baseline]\label{prop:zerobas}
If $X$ and $Y$ are independent then $E[\hat{\mathcal{D}}(\mathcal{S}_n|X,Y)]=0$ for all $n$.
\end{proper}
\begin{proper}[Quantification Unbiasedness]\label{prop:qunbias}
If $\mathcal{D}(X_1,Y_1) = \mathcal{D}(X_2,Y_2)$ then $E[\hat{\mathcal{D}}(\mathcal{S}_n|X_1,Y_1)] = E[\hat{\mathcal{D}}(\mathcal{S}_m|X_2,Y_2)]$ for all $n$ and $m$.	
\end{proper}
%Property~\ref{prop:zerobas} states that we wish to obtain an estimated value of 0 dependency on average when $X$ and $Y$ are independent. Property~\ref{prop:qunbias} instead states that we wish to obtain a constant amount of estimated dependency when the population value of $\mathcal{D}$ is constant: e.g.\ constant with regards to $n$.
We saw in Example~\ref{ex:mic} that MIC does not satisfy either property.
%By adjusting dependency estimates with the approach used for $\kappa$ in Eq.\~\eqref{eq:kappa} we do satisfy Property~\ref{prop:zerobas}.
Therefore, we propose an adjustment that can be applied to MIC and in general to any dependency estimator $\hat{\mathcal{D}}$:
\begin{definition}[Adjustment for Quantification] \label{def:adjquant}
\[
\mbox{\emph{A}}\hat{\mathcal{D}}(\mathcal{S}_n|X,Y) \triangleq \frac{\hat{\mathcal{D}}(\mathcal{S}_n|X,Y) - E[\hat{\mathcal{D}}_0(\mathcal{S}_n|X,Y)]}{\max{\hat{\mathcal{D}}(\mathcal{S}_n|X,Y)} - E[\hat{\mathcal{D}}_0(\mathcal{S}_n|X,Y)]}
\]
is the adjustment of $\hat{\mathcal{D}}(\mathcal{S}_n|X,Y)$, where $\max{\hat{\mathcal{D}}(\mathcal{S}_n|X,Y)}$ and $E[\hat{\mathcal{D}}_0(\mathcal{S}_n|X,Y)]$ are respectively the maximum of $\hat{\mathcal{D}}$, and its expected value under the null.
\end{definition} 
$\mbox{A}\hat{\mathcal{D}}(\mathcal{S}_n|X,Y)$ has always zero baseline (Property~\ref{prop:zerobas}) being 0 on average when $X$ and $Y$ are independent, and attains 1 as maximum value.
This adjustment can be applied to $r^2$ and MIC to increase their interpretability when they are used as proxies of the amount of noise in a linear relationship and a functional relationship respectively. 
We just have to identify their distribution on the sample $\mathcal{S}_n$ under the null:
\begin{itemize}[topsep=1ex,itemsep=-1ex,partopsep=1ex,parsep=1ex] % cheating
\item $r^2_0(\mathcal{S}_n|X,Y)$: follows a Beta distribution with parameters $\frac{1}{2}$ and $\frac{n-2}{2}$\cite{Giles};
%\item $\mbox{Gini}_0(\mathcal{S}_n|X,Y)$: does not allow an analytical closed form. Nonetheless, it possible to compute its expected value $E[\mbox{Gini}_0(\mathcal{S}_n|X,Y)]$ and variance $\mbox{Var}(\mbox{Gini}_0(\mathcal{S}_n|X,Y)$ under the null~\cite{Dobra01}. See Appendix~\ref{app:null};
\item $\mbox{MIC}_0(\mathcal{S}_n|X,Y)$: this distribution can be computed using $s=1,\dots,S$ Monte Carlo permutations $\mbox{MIC}^{(s)}_0$ of MIC~\cite{Reshef2011}. See Appendix~\ref{app:nullmic}.
\end{itemize}

\noindent Therefore the adjusted Pearson's correlation squared $r^2$ and the adjusted MIC are:
\begin{equation} \label{eq:ar2}
\mbox{A}r^2(\mathcal{S}_n|X,Y) = \frac{r^2(\mathcal{S}_n|X,Y) - \frac{1}{n-1}}{1 - \frac{1}{n-1}}
\end{equation}
%\begin{equation}
%\mbox{AMIC} = \frac{\mbox{MIC} - \frac{1}{S} \sum_{s = 1}^S \mbox{MIC}_0^{(s)} }{1 - E_0[\mbox{MIC}]} \ \text{with} \ E_0[\mbox{MIC}] = \frac{1}{S} \sum_{s = 1}^S \mbox{MIC}_0^{(s)}
%\end{equation}
\begin{equation} \label{eq:amic}
\mbox{AMIC}(\mathcal{S}_n|X,Y) = \frac{\mbox{MIC}(\mathcal{S}_n|X,Y) - \mbox{EMIC}_0 }{1 - \mbox{EMIC}_0} 
\end{equation}
where $E[r^2_0(\mathcal{S}_n|X,Y)] = \frac{1}{n-1}$ and $\mbox{EMIC}_0 = \frac{1}{S} \sum_{s = 1}^S \mbox{MIC}_0^{(s)}$.
$\mbox{EMIC}_0$ converges to $E[\mbox{MIC}_0(\mathcal{S}_n|X,Y)]$ at the limit of infinite permutations. However, good estimation accuracy can be obtained even with few permutations because of the law of large numbers~\cite{Good2005}.

%However, much fewer than $n!$ permutations (e.g. 30) can be used because of the fast convergence rate of the estimated expected value under the null:
%\begin{propo}
%$\mbox{\emph{EMIC}}_0 = \frac{1}{S} \sum_{s = 1}^S \mbox{\emph{MIC}}_0^{(s)}$ converges in probability to $E_0[\mbox{\emph{MIC}}]$ because of the strong law of large numbers. \hfill $\square$
%\end{propo}
%Moreover, our $\mbox{A}r^2$ turns out to be equal to the adjusted coefficient of determination $\bar{R}^2$~\cite{Giles} in linear regression taking into account different sample sizes $n$ and more explanatory variables. 
%\begin{propo}
%$\mbox{\emph{A}}r^2 = 1 - (1-r^2)\frac{n-1}{n-2} = \bar{R}^2$ \hfill $\square$
%\end{propo}
%Therefore, our approach to adjust dependency measures for quantification is general and it reduces to known adjustments when it is possible to achieve it analytically. % super ugly

In the next section we will see how our adjustments satisfy Property~\ref{prop:zerobas} and Property~\ref{prop:qunbias}.

\subsection{Experiments with Pearson Correlation and MIC.}

We aim to experimentally verify the zero baseline Property~\ref{prop:zerobas} and that our adjustment in Definition~\ref{def:adjquant} enables better \emph{interpretability}. We generate a linear relationship between a uniformly distributed $X$ in $[0,1]$ and $Y$ on $n=30$ points adding different percentages of white noise. We compare $r^2$ and $\mbox{A}r^2$. Each white noise level is obtained by substituting a given percentage of points from the relationship and assigning to the $Y$ coordinate a random value in $[0,1]$. Figure~\ref{fig:noise_r2} shows the average $r^2$  and $\mbox{A}r^2$ for 2,000 simulated relationships with a given percentage of white noise: $r^2$ is not zero on average when the amount of noise is 100\% (last plot on the right). On the other hand, $\mbox{A}r^2$ is very close to zero when there is complete noise and it fully exploits its range of values from one to zero, mapping the domain from 0\% to 100\% noise. This yields more interpretability and enables $\mbox{A}r^2$ to be used as a \emph{proxy to quantify the amount of noise in linear relationships}.

Similarly, we generated a quadratic relationship between $X$ and $Y$ in $[0,1]\times [0,1]$ on $n=60$ points with different levels of noise to compare MIC and AMIC. Figure~\ref{fig:noise_MIC} shows that the value of MIC computed with default parameters~\cite{Reshef2011}, is about 0.26 on average for complete noise. AMIC computed with $S = 30$ permutations, is instead very close to zero and it exploits better its range of values from one to zero. AMIC is more interpretable than MIC and might be used more intuitively as a \emph{proxy for the amount of noise in a functional relationship}.
\begin{figure*}[t]
\hspace*{-.7cm}  % cheating
\includegraphics[scale=.65]{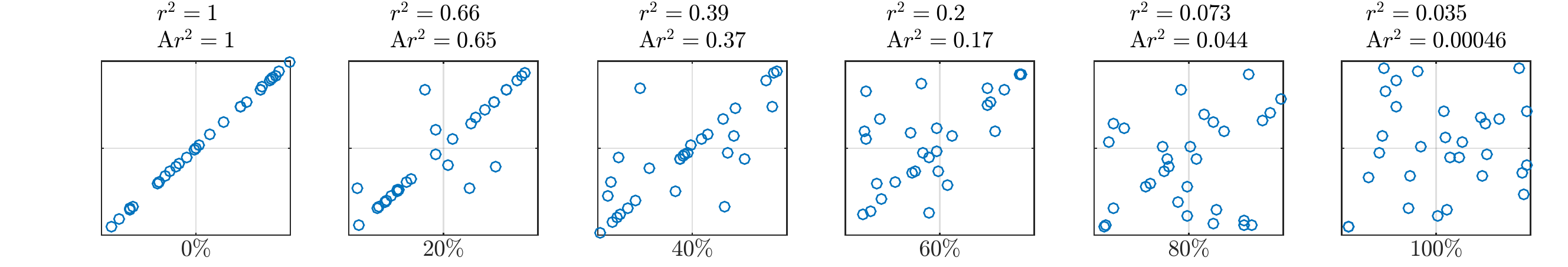}
\caption{Average value of $r^2$ and A$r^2$ for different percentages of white noise. \emph{Linear} relationship between $X$ and $Y$ induced on $n=30$ points in $[0,1]\times[0,1]$. A$r^2$ becomes zero on average on 100\% noise enabling a more interpretable range of variation.} \label{fig:noise_r2}
\vspace{-.5em} % Cheating
\end{figure*}
\begin{figure*}[t]
\hspace*{-.7cm}  % cheating
\includegraphics[scale=.65]{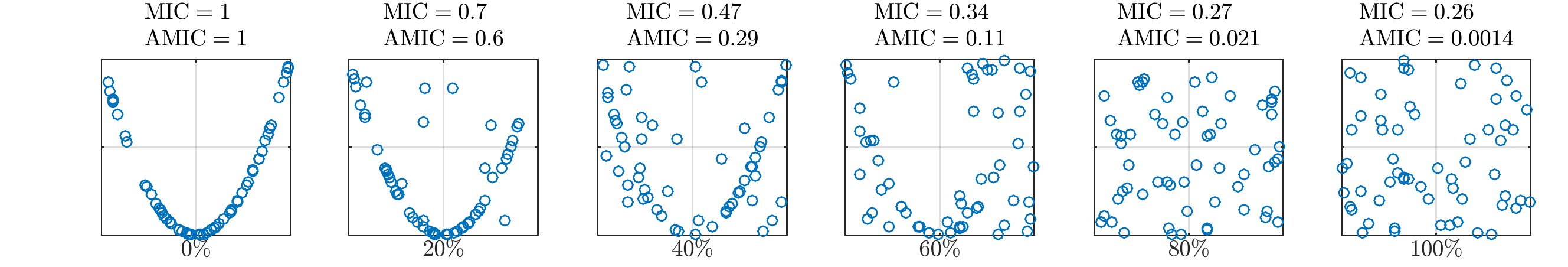}
\caption{Average value of MIC and AMIC for different percentages of white noise. \emph{Quadratic} relationship between $X$ and $Y$ induced on $n=60$ points in $[0,1]\times[0,1]$. AMIC becomes zero on average on 100\% noise enabling a more interpretable range of variation.} \label{fig:noise_MIC}
\vspace{-.7em} % Cheating
\end{figure*}

The average value of a dependency estimator should not be biased with regards to the sample $\mathcal{S}_n$ as stated in Property~\ref{prop:qunbias}. In Figure~\ref{fig:noiseprc}, we show that $r^2$ and MIC suffer from this problem: their estimates are higher on average when $n$ is smaller. Figure~\ref{fig:noiseprc} shows the average value of raw and adjusted measures on 2,000 simulations for different levels of noise and sample size $n$: $r^2$ and $\mbox{A}r^2$ are compared on linear relationships; MIC and AMIC are compared on linear, quadratic, cubic, and 4th root relationships.
%MIC values on different functional relationships at a fixed level of noise are equal: i.e.,\ MIC is an equitable measure~\cite{Reshef2011}. Thus Figure~\ref{fig:eqmic} and~\ref{fig:eqamic} show the average behaviour of MIC and AMIC values on the four relationships.
Neither the zero baseline Property~\ref{prop:zerobas} nor the quantification unbiasedness Property~\ref{prop:qunbias} is verified for the raw measures $r^2$ and MIC, shown respectively in Figure~\ref{fig:eqr2} and~\ref{fig:eqmic}.
%Indeed, $r^2$ and MIC have different average values depending on the sample size $n$ at a fixed level of noise. More specifically, smaller $n$ yields higher values on average.
Instead, $\mbox{A}r^2$ and AMIC in Figure~\ref{fig:eqar2} and~\ref{fig:eqamic}, satisfy both properties: they have zero baseline and their average value is not biased with regards to the sample size $n$. We claim that these properties improve \emph{interpretability} when quantifying dependency and \emph{enhance equitability} for MIC~\cite{Reshef2011}.
\begin{figure*}
\centering     %%% not \center
\subfigure[$r^2$ (raw)]{\label{fig:eqr2}\includegraphics[scale=.5]{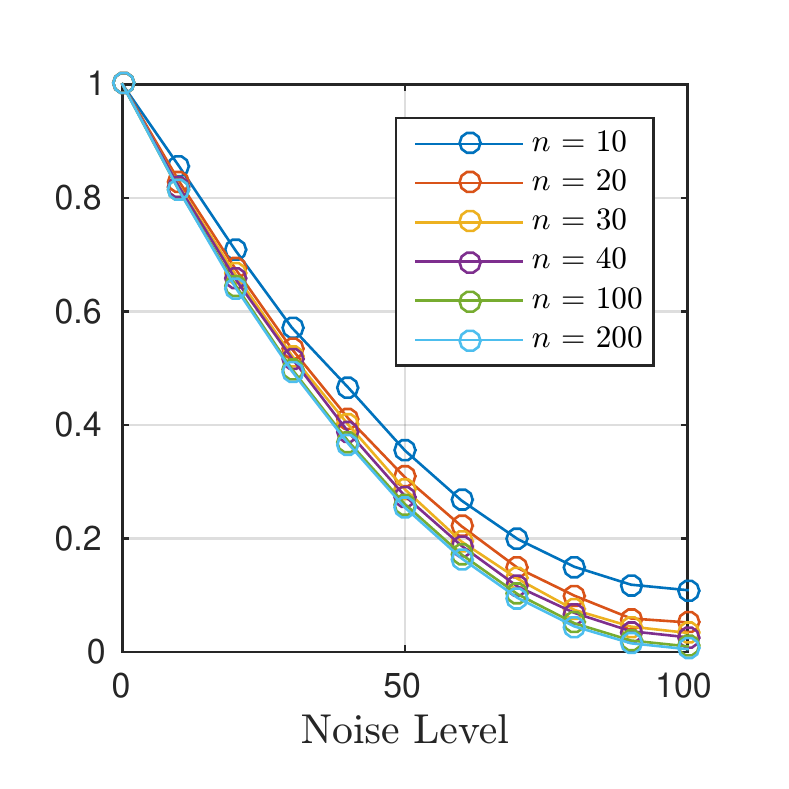}}
\subfigure[$\mbox{A}r^2$ (\emph{adjusted})]{\label{fig:eqar2}\includegraphics[scale=.5]{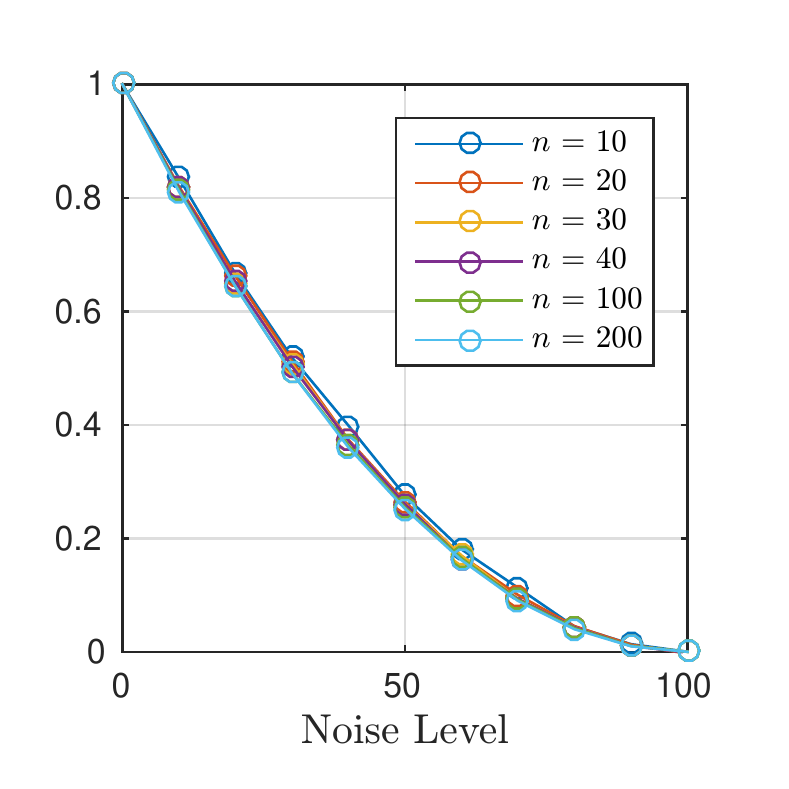}}
\subfigure[MIC (raw)]{\label{fig:eqmic}\includegraphics[scale=.5]{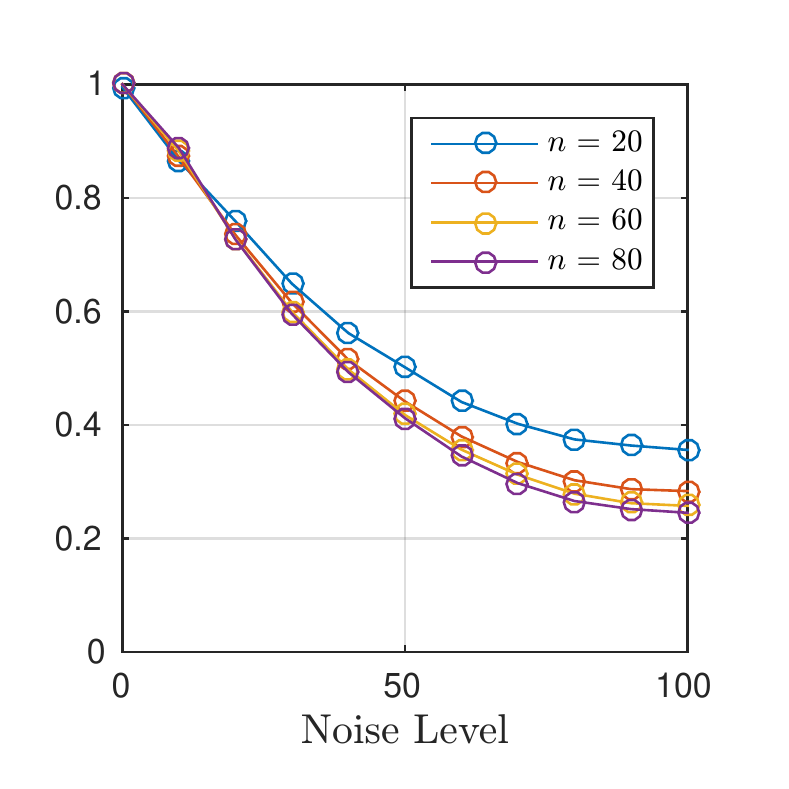}}
\subfigure[AMIC (\emph{adjusted})]{\label{fig:eqamic}\includegraphics[scale=.5]{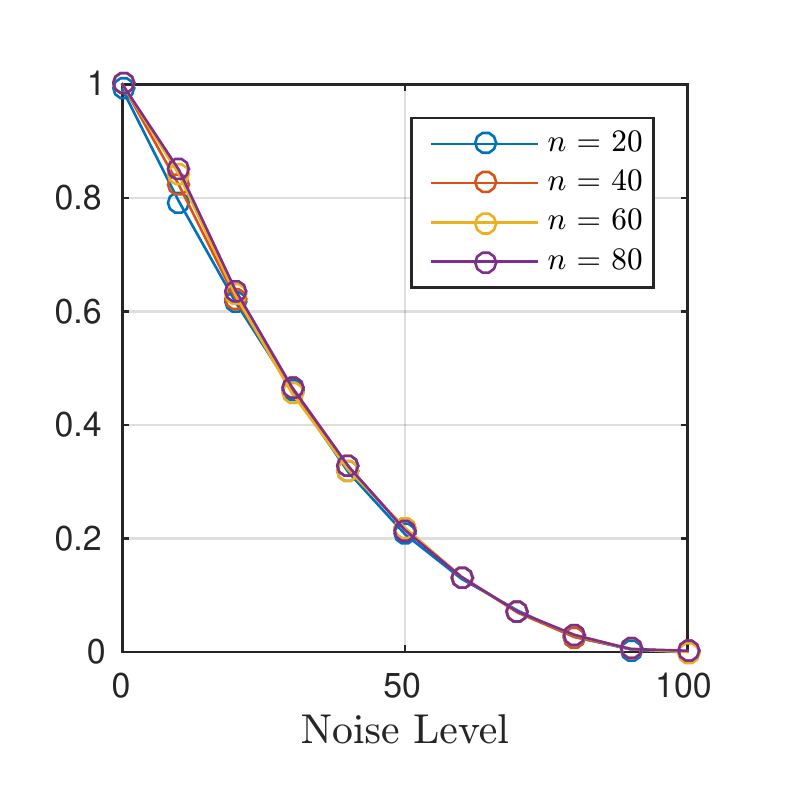}}
\caption{Average value of $r^2$, $\mbox{A}r^2$, MIC, and AMIC on different amount of noise and different sample size $n$.
%$r^2$ and $\mbox{A}r^2$ are compared on linear relationships. 
Raw measures show higher values for smaller $n$ on average. Instead, Property~\ref{prop:qunbias} of unbiasedness with regards to $n$ is empirically verified for \emph{adjusted} measures.}\label{fig:noiseprc}
\vspace{-1em}
\end{figure*}

\section{Adjusting Estimates for Ranking}

When the task is ranking dependencies according to their strength, dependencies induced on smaller sample size $n$ or on variables with more categories have more chances to be ranked higher as shown in Example~\ref{ex:gini} for Gini gain. This issue is due to inflated estimates due to finite samples. Indeed, $r^2$ and MIC suffer from the same problem.

Consider this experiment: we generate five samples $\mathcal{S}_n$ with $n = [20,40,60,80,100]$ to simulate different amount of missing values for a joint distribution $(X,Y)$ where $X$ and $Y$ are independent. For each sample, we compute $r^2(\mathcal{S}_n|X,Y)$, we select $\mathcal{S}_n$ that achieves the highest value, and iterate this process 10,000 times. Given that the population value $\rho^2(X,Y) = 0$ for all samples, all samples should have equal chances to maximize the $r^2$. However, Figure~\ref{fig:rhoWinEx} shows that $\mathcal{S}_{20}$ has higher chances to maximize $r^2$. \emph{This implies that dependencies estimated on samples with missing values have higher chances to be ranked higher in terms of strength.}
\begin{figure}[h]
\centering
\includegraphics[scale=.7]{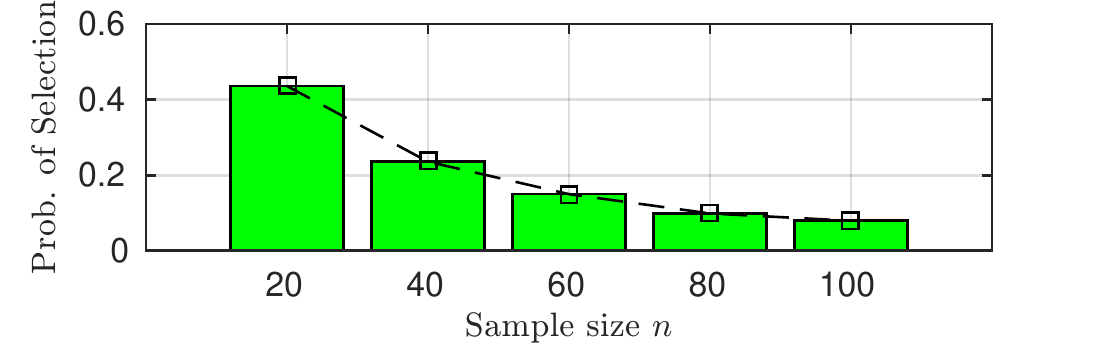}
\caption{Probability to select the sample $\mathcal{S}_n$ with $n = [20,40,60,80,100]$ according $r^2(\mathcal{S}_n|X,Y)$ fixing the population value $\rho^2(X,Y) = 0$. The relationship with $n=20$ has more chances to be ranked higher.} \label{fig:rhoWinEx}
\vspace{-.5em} % cheating
\end{figure}

\noindent We would like that dependencies which share the same population value for $\mathcal{D}$ had the same chances to maximize the dependency estimate $\hat{\mathcal{D}}$ even if estimated on different samples. More formally:
\begin{proper}[Ranking Unbiasedness] \label{prop:rankingunbias}
%\[
%P\big( \underset{i,j}{\arg\max} \ \hat{\mathcal{D}}(X_i,Y_j) \big) = \mathit{const.}
%\]
%$\forall i,j$ s.t. $\mathcal{D}(X_i,Y_j) = \text{const.}$
If $\mathcal{D}(X_1,Y_1) = \mathcal{D}(X_2,Y_2) = ..  = \mathcal{D}(X_K,Y_K)$ then the probability of $\hat{\mathcal{D}}(\mathcal{S}_{n_i}|X_i,Y_i)$ being equal or greater than any $\hat{\mathcal{D}}(\mathcal{S}_{n_j}|X_j,Y_j)$ is $\frac{1}{K}$ for all $n_i,n_j$, $1\leq i \neq j \leq K$.
\end{proper}
%This requirement is more strict than the constant expected value, defined in the quantification unbiasedness Property~\ref{prop:qunbias}.
%This extends the definition of unbiasedness in~\cite{Drobra01}, where the population value $\mathcal{D}(X,Y)$ is only fixed to 0.
For example in Figure~\ref{fig:rhoWinEx} we would like constant probability of selection equal to $\frac{1}{5}= 0.20$.
Property~\ref{prop:rankingunbias} is useful to achieve \emph{higher accuracy} when the task is ranking the pair of variables that show the stronger relationship. 

%This task is more general than variable selection where we aim to rank the most strongly related variables $X$ to a fixed target variable $Y$. However this paper focuses on variable selection for the sake of simplicity, and because it is an important subtask in many data mining algorithms.

Biases in ranking are well known in the decision tree community~\cite{Dobra01} as shown in Example~\ref{ex:gini}. Distributional properties of the raw dependency measure have to be employed to adjust for biases in ranking. For example, ranking according to $p$-values or standardized measures are possible solutions~\cite{Dobra01,Romano2014}. They both quantify if the estimate $\hat{\mathcal{D}}$ is statistically significant. Here we extend the standardization technique to any dependency measure estimate $\hat{\mathcal{D}}$ to employ it for unbiased ranking:
\begin{definition}[Standardization for Ranking] \label{def:std}
\[
\mbox{\emph{S}}\hat{\mathcal{D}}(\mathcal{S}_n|X,Y) \triangleq \frac{\hat{\mathcal{D}}(\mathcal{S}_n|X,Y) - E[\hat{\mathcal{D}}_0(\mathcal{S}_n|X,Y)]}{\sqrt{\mbox{\emph{Var}}(\hat{\mathcal{D}}_0(\mathcal{S}_n|X,Y))}}
\]
is the standardized $\hat{\mathcal{D}}(\mathcal{S}_n|X,Y)$, where $E[\hat{\mathcal{D}}_0(\mathcal{S}_n|X,Y)]$ and $\mbox{\emph{Var}}(\hat{\mathcal{D}}_0(\mathcal{S}_n|X,Y))$ are, respectively, the expected value and the variance of $\hat{\mathcal{D}}$ under the null.
\end{definition}
%It is reasonable to use standardized measures rather than raw measures for ranking tasks. However, we will shortly see that even $p$-values and standardized estimates are unbiased only under the null hypothesis of independence between $X$ and $Y$. More specifically, $p$-values and $\mbox{S}\hat{\mathcal{D}}(\mathcal{S}_n|X,Y)$ are biased towards big sample size $n$.
Nonetheless, it is very difficult to satisfy the ranking unbiasedness Property~\ref{prop:rankingunbias} just with $\mbox{S}\hat{\mathcal{D}}$. Therefore we also define an adjustment to dependency measures whose bias can be tuned according to a parameter $\alpha$. This is particularly useful when $\alpha$ can be tuned with cross-validation, e.g.\ in random forests.
\begin{definition}[Adjustment for Ranking]
\[
\mbox{\emph{A}}\hat{\mathcal{D}}(\mathcal{S}_n|X,Y)(\alpha) \triangleq \hat{\mathcal{D}}(\mathcal{S}_n|X,Y) - q_0(1-\alpha)
\]
is the adjustment at level $\alpha \in (0,1]$ of $\hat{\mathcal{D}}(\mathcal{S}_n|X,Y)$, where $q_0(1-\alpha)$ is the $(1-\alpha)$-quantile of $\hat{\mathcal{D}}_0(\mathcal{S}_n|X,Y)$ under the null: i.e., $P\Big(\hat{\mathcal{D}}(\mathcal{S}_n|X,Y) \leq q_0(1-\alpha) \Big) = 1 - \alpha$.
\end{definition}
At a fixed significance level $\alpha$, the quantile $q_0(1-\alpha)$ induces more penalization when the estimate is not statistically significant. With regards to Example~\ref{ex:gini}, fixing $\alpha = 0.05$ we penalize the variable $X_1$ and the variable $X_2$ by $q_0(0.95)$ equal to 0.036 and 0.053 respectively. The latter variable gets penalized more because it is less statistically significant having more categories. In contrast, $\mbox{S}\hat{\mathcal{D}}$ fixes the amount of penalization based on statistical significance and does not allow to tune the bias during ranking. In the next section we aim to show the shortcomings of raw measures and standardized measures for ranking tasks.
% which implies more statistical significance. 
%No interpretability needed

\begin{figure*}
\centering     %%% not \center
\subfigure[$X$ independent of $Y$ ($\rho^2 = 0$).]{\label{fig:100}\includegraphics[scale=.7]{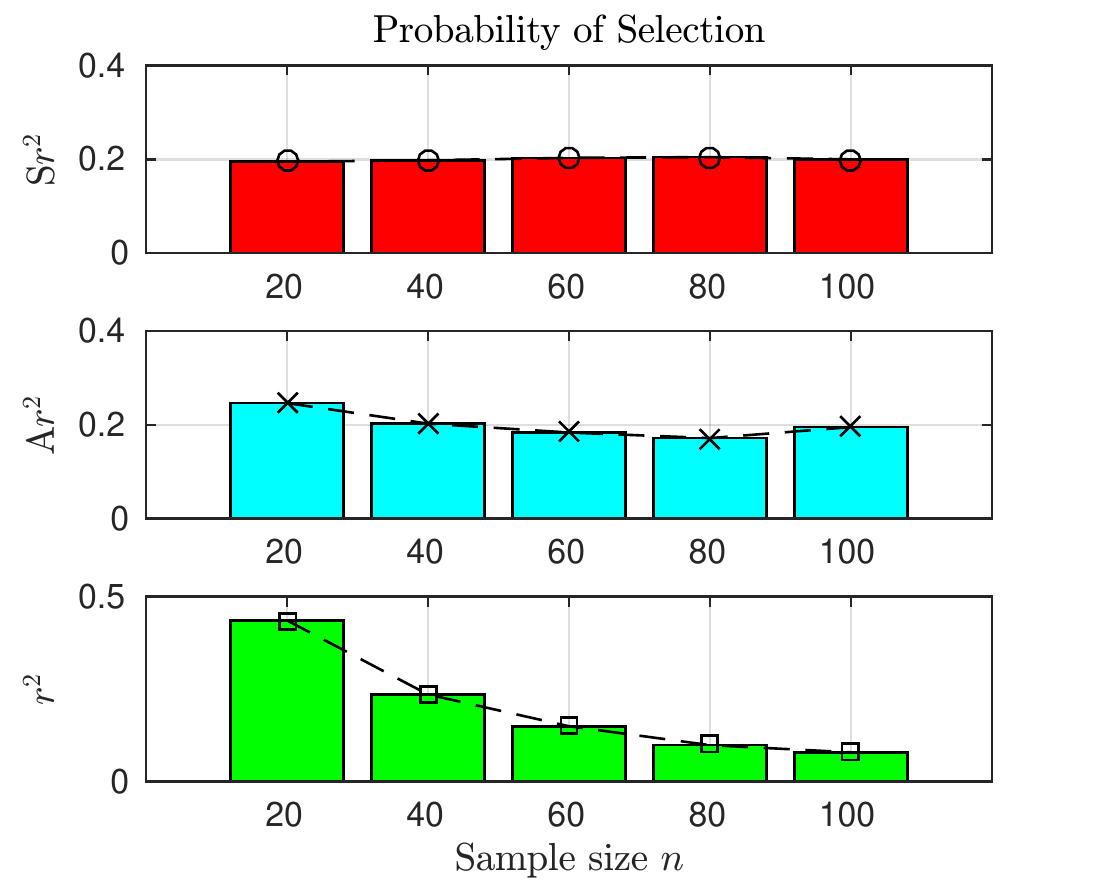}}
\subfigure[$X$ linearly related to $Y$ with 10\% white noise ($\rho^2 > 0$).]{\label{fig:10}\includegraphics[scale=.7]{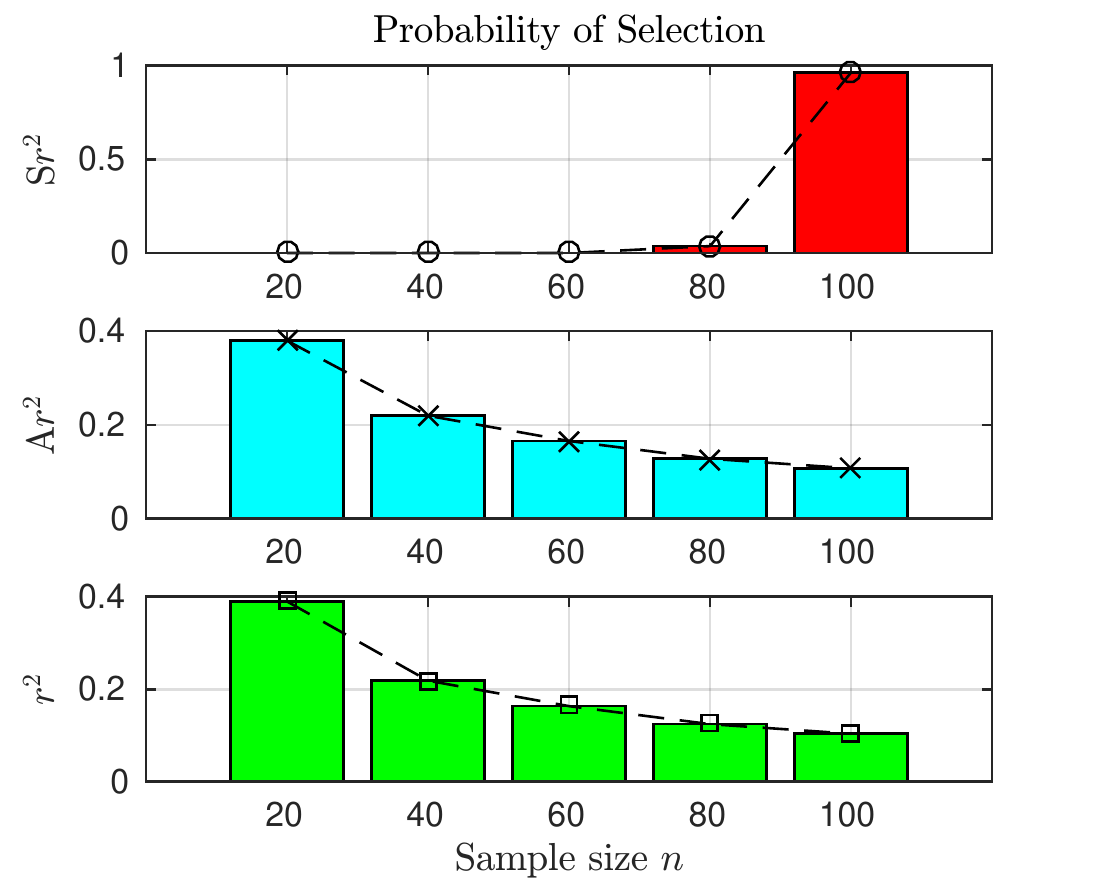}}
\caption{Probability to select the sample $\mathcal{S}_n$ induced on $n = [20,40,60,80,100]$ according adjusted measures: $\mbox{S}r^2$ satisfies the ranking unbiasedness Property~\ref{prop:rankingunbias} when $\rho^2 = 0$ but not when $\rho^2 > 0$. All measures show to be biased in the latter case: it is difficult to satisfy Property~\ref{prop:rankingunbias} in general.}\label{fig:selbias}
\vspace{-1em}
\end{figure*}

\subsection{Ranking Biases of Raw and Standardized Measures.}

We use $r^2$ and its adjusted versions in a case study: $\mbox{A}r^2$ is defined as per Eq.~\eqref{eq:ar2}, $\mbox{A}r^2(\alpha) = r^2 - q_0(1-\alpha)$ where $q_0(1-\alpha)$ is computed with the Beta distribution (see Section~\ref{sec:willadjquant}), and the standardized $r^2$ is defined as: 
\begin{equation}
\mbox{S}r^2(\mathcal{S}_n|X,Y) = \frac{r^2(\mathcal{S}_n|X,Y) - \frac{1}{n-1}}{\sqrt{\frac{2(n-2)}{(n-1)^2(n+1)}}}
\end{equation}
We do not evaluate $p$-values because their use is equivalent to the use of standardized measures which are also much easier to compute.

We perform similar experiments as in the previous section: we fix the population value for a dependency and compute estimates on different samples $\mathcal{S}_n$ to compute their probability of selection. We select samples according $r^2$, $\mbox{A}r^2$, and $\mbox{S}r^2$. Figure~\ref{fig:100} shows the probability of selection of different samples at fixed population value $\rho^2 = 0$. We can clearly see that the ranking unbiasedness Property~\ref{prop:rankingunbias} is satisfied if we use $\mbox{S}r^2$ (top plot). On the other hand the sole adjustment for quantification $\mbox{A}r^2$ is not enough to remove $r^2$ bias towards small $n$. Nonetheless, Figure~\ref{fig:10} shows that if we generate a linear relationship between $X$ and $Y$ with 10\% white noise (i.e.,\ $\rho^2$ is fixed to a value greater than $0$), $\mbox{S}r^2$ is biased towards big $n$. This is because we prefer statistically significant relationships. This phenomena might have been overlooked in the decision tree community~\cite{Strobl2007Gini,Frank98}.

Given that it is difficult to satisfy the ranking unbiasedness Property~\ref{prop:rankingunbias} in general, we show how $\alpha$ in our adjustment $\mbox{A}r^2(\alpha)$ might be used to tune the bias when it is possible. Figure~\ref{fig:biasalpha} shows that with big $\alpha$ ($\alpha \approx 0.4$) relationships on small $n$ have higher probability to be selected. On the other hand, small $\alpha$ ($\alpha \approx 0.05$) tunes the bias towards higher sample size $n$.  
\begin{figure}[h]
\centering
\includegraphics[scale=.7]{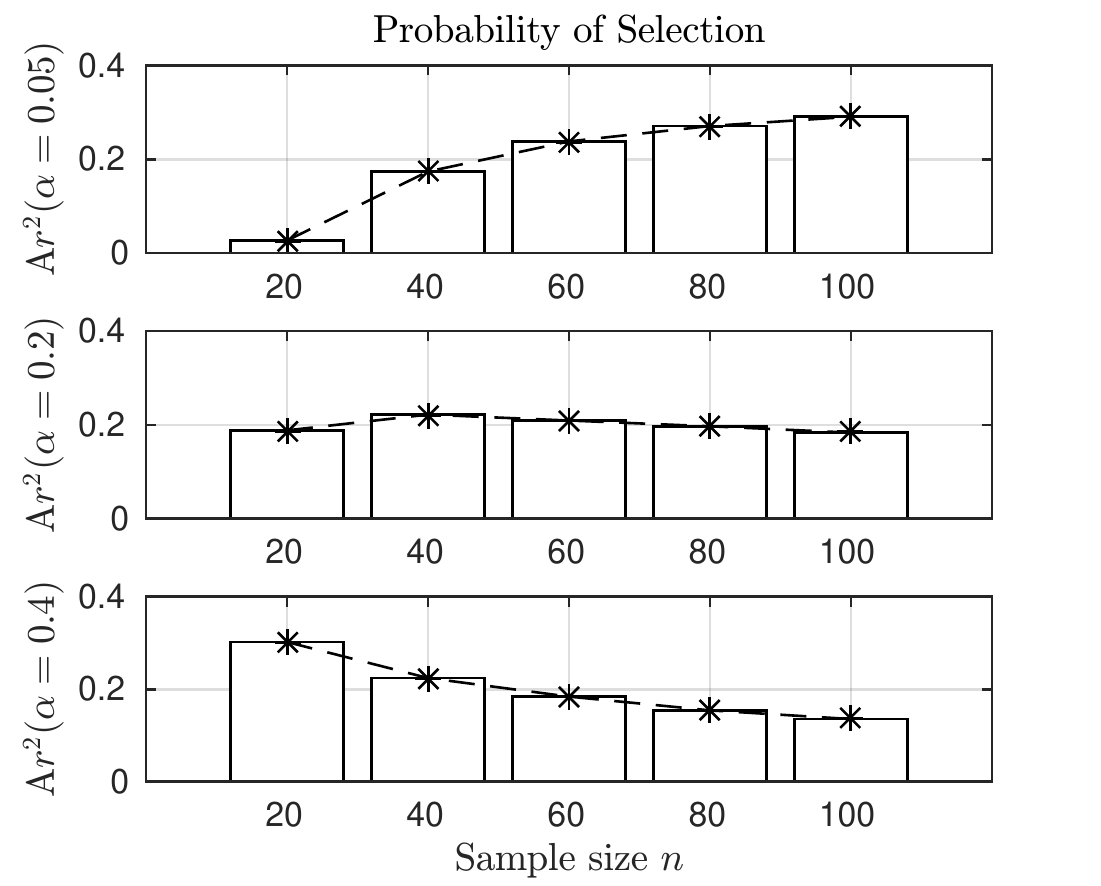}
\caption{Probability of selection of a sample $\mathcal{S}_n$ when $X$ is linearly related to $Y$ with 10\% white noise using $\mbox{A}r^2(\alpha)$: $\alpha$ tunes the bias towards small $n$ with a big $\alpha$ (bottom plot) or big $n$ with a small $\alpha$ (top plot).} \label{fig:biasalpha}
\vspace{-1.5em} %cheating
\end{figure}
On a real ranking task, it is reasonable to rank according to $\mbox{A}r^2(\alpha)$ and see how the rank changes with changes of $\alpha$ rather than relying on a single ranking based on biased measures such as $r^2$, $\mbox{A}r^2$, or $\mbox{S}r^2$. The best value for $\alpha$ can be chosen by cross-validation when it is possible. Similar conclusions can be drawn for MIC and its adjusted versions.

\subsection{Experiments with Pearson Correlation and MIC.}

MIC and $r^2$ have been used in~\cite{Reshef2011} to identify the strongest related pair of socio-economic variables using the WHO dataset.
%\footnote{The WHO dataset is available online with a glossary describing variables at: \url{http://www.exploredata.net/Downloads/WHO-Data-Set}.}
This dataset is a collection of $m = 357$ variables for $n = 201$ countries. Some of the variables have a high percentage of missing values and they are available on much fewer than $n = 201$ samples. In this section, we aim to alert the users of MIC and $r^2$ about ranking biases for relationships induced on different sample size $n$. We conduce an experiment: we choose a reference socio-economic variable $Y$ and select the top related variable according to $r^2$ and its adjusted versions. Then, we estimate the dependency between two variables based on the data points available for both $X$ and $Y$. We only consider dependencies estimated on at least $ n \geq 10$ data points. Figure~\ref{fig:exrho} shows the top-most dependent variable $X$ to $Y =$``\emph{Breast cancer number of female deaths}'' using $r^2$, $\mbox{A}r^2$, $\mbox{S}r^2$, and $\mbox{A}r^2(\alpha = 0.1)$. 
%http://www.iflscience.com/health-and-medicine/shift-work-causes-breast-cancer-mice-according-new-study-so-what-does-mean
%https://theconversation.com/the-mystery-of-breast-cancer-40036
\begin{figure*}
\centering
\includegraphics[scale=.48]{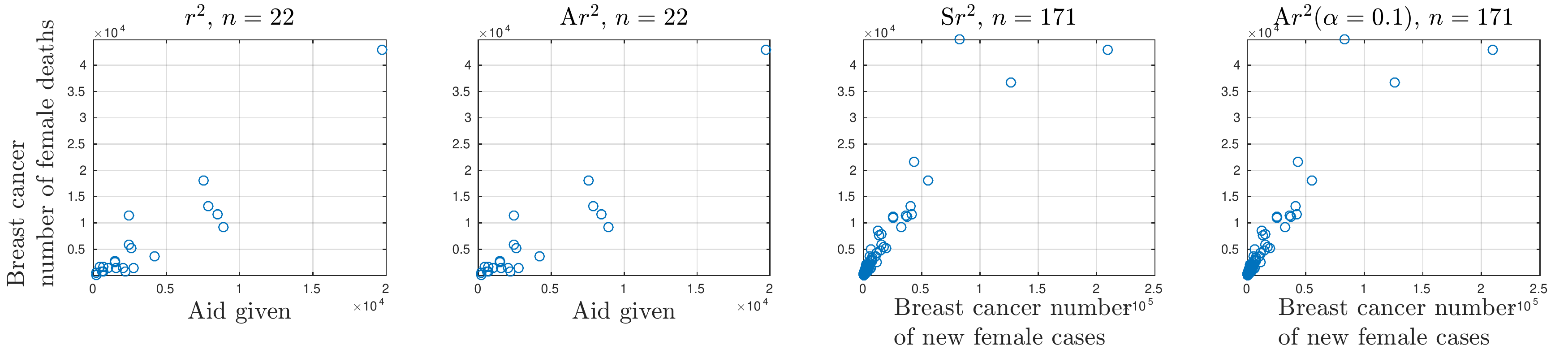}
\caption{Plot of the top-most dependent variable $X$ to $Y =$``\emph{Breast cancer number of female deaths}'' according different adjustments for $r^2$.
$r^2$ and $\mbox{A}r^2$ favor relationships on small $n$. S$r^2$, and A$r^2(\alpha =0.1)$ penalize relationships on small $n$ and select more reasonably $X =$``\emph{Breast cancer number of new female cases}''.}\label{fig:exrho}
\vspace{-1em}
\end{figure*}
The top-most dependent variable according to $r^2$ and $\mbox{A}r^2$ is $X =$``\emph{Aid given}'' which quantifies the amount of aid given to poor countries in million US\$. Instead, $\mbox{S}r^2$ and $\mbox{A}r^2(\alpha = 0.1)$ identify $X =$``\emph{Breast cancer number of new female cases}'' which seems a more reasonable choice given that the number of deaths might be correlated with new cancer cases. Indeed as seen in the previous Section, $r^2$ and $\mbox{A}r^2$ favour variables induced on small $n$. Moreover from the plot in Figure~\ref{fig:exrho} we see that they are very sensitive to extreme values or outliers: i.e.\ the United States show a very high number of deaths due to breast cancer $\approx 43{,}000$ in a year and a very high amount of aid given $\approx 20$ Billion US\$; this increases the chances for a high $r^2$ or $\mbox{A}r^2$.

MIC is even more inclined to select variables induced on small $n$. For example we see in Figure~\ref{fig:exmic} that if we target $Y =$``\emph{Maternal mortality}'' which quantifies the number of female deaths during pregnancy (out of 100,000 live births), and we choose MIC or AMIC to identify the top dependent variable, we get $X =$``\emph{Oil consumption per person}'' (tonnes per year).
\begin{figure*}
\centering
\includegraphics[scale=.48]{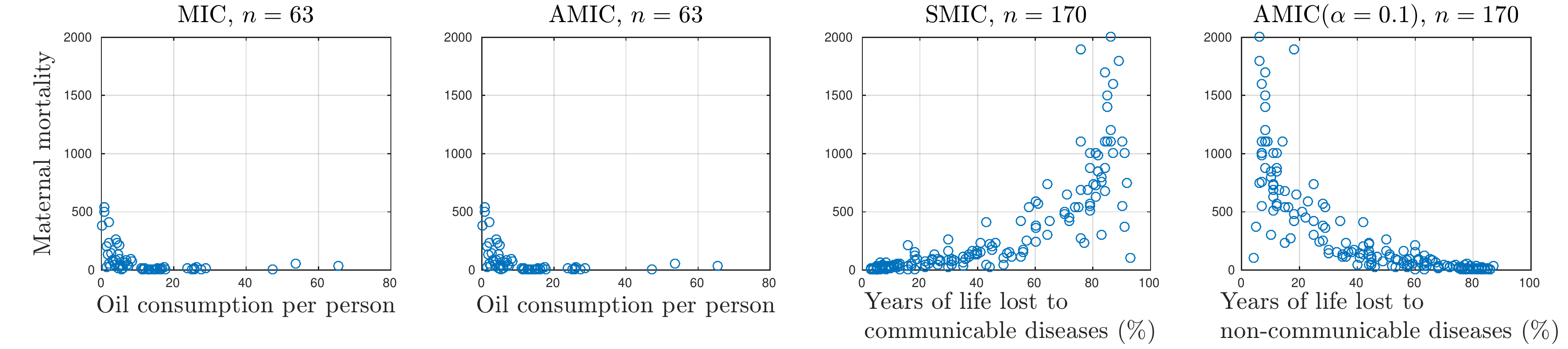}
\caption{Plot of the top-most dependent variable $X$ to $Y =$``\emph{Maternal Mortality}'' according different adjustments for MIC. MIC and AMIC are biased towards small $n$.
SMIC and AMIC$(\alpha = 0.1)$ select more reasonably either $X =$``\emph{Years of life lost to communicable diseases}'' or $X =$``\emph{Years of life lost to non-communicable diseases}''.}\label{fig:exmic}
\vspace{-1em} %cheating
\end{figure*}
There seems to exist an inversely proportional relationship between $X$ and $Y$, possibly due to the common cause of overall economic development but it is difficult to argue in favor of the amount of oil/energy consumption per person as the most dependent variable to maternal mortality. We also identified the top variables according to SMIC and AMIC$(\alpha = 0.01)$ computed with 10,000 Monte Carlo permutations. More specifically, $\mbox{SMIC} = \frac{\mbox{MIC} - \mbox{EMIC}_0}{\mbox{SDMIC}_0}$, where SDMIC$_0$ is the unbiased estimator of the standard deviation of MIC permutations; and AMIC$(\alpha) = \mbox{MIC} - q_0(1-\alpha)$, where $q_0(1-\alpha)$ is the $\lceil (1- \alpha) \cdot S \rceil$-th MIC value from the sorted list of $S$ MIC permutations in ascending order (See Appendix~\ref{app:nullmic} for more details). The top variables according to SMIC and AMIC$(\alpha = 0.01)$ are instead variables related to communicable/non-communicable (infectious/non-infectious) diseases which is more intuitively related to mortality.  

Table~\ref{tbl:avgsize} shows the average sample size $n$ for the chosen top variables with different adjustments. The user of dependency measures should be aware of the bias of raw dependency estimators $\hat{\mathcal{D}}$ towards small $n$ and try to explore results from their adjusted versions $\mbox{S}\hat{\mathcal{D}}$ and $\mbox{A}\hat{\mathcal{D}}(\alpha)$ when ranking. Ultimately, the latter can be chosen to tune the bias towards smaller $n$ (big $\alpha$) or big $n$ (small $\alpha$).
\begin{table}
\centering 
\small
\caption{Average sample size $n$ for the top relationships in the WHO datasets. The raw estimator of a dependency measure $\hat{\mathcal{D}}$ favours relationships on small $n$. Instead, its standardized version S$\hat{\mathcal{D}}$ favours big $n$. With A$\hat{\mathcal{D}}(\alpha)$ it is possible to tune the bias towards small $n$ (big $\alpha$) or big $n$ (small $\alpha$).} \label{tbl:avgsize}
\begin{tabular}{lll}
\toprule
\emph{Measure} & $r^2$ & MIC \\
\toprule
$\hat{\mathcal{D}}$ & 114.6 (min) & 103.1 (min) \\
$\mbox{A}\hat{\mathcal{D}}$ & 115.1 & 106.9 \\
$\mbox{S}\hat{\mathcal{D}}$ & 133.7 (max) &  131.8 (max) \\
\hline
$\mbox{A}\hat{\mathcal{D}}(\alpha = 0.4)$ & 116.2 (min) & 111.7 (min)\\
$\mbox{A}\hat{\mathcal{D}}(\alpha = 0.05)$ & 121.1 & 119.4 \\
$\mbox{A}\hat{\mathcal{D}}(\alpha = 0.1)$ & 120.2 (max) & 117.4 (max)\\
%$\mbox{A}\hat{\mathcal{D}}(\alpha = 0.5)$ & 115.5  110.8 
\hline
\end{tabular}
\vspace{-1.5em} %cheating
\end{table}
%\begin{table*}
%\centering 
%\small
%\caption{Average sample size $n$ for the top relationships in the WHO datasets. The raw estimator of a dependency measure $\hat{\mathcal{D}}$ favours relationships on small $n$. Instead, its standardized version S$\hat{\mathcal{D}}$ favours big $n$. With A$\hat{\mathcal{D}}(\alpha)$ it is possible to tune the bias towards small $n$ (big $\alpha$) or big $n$ (small $\alpha$).}
%\begin{tabular}{llllllll}
%\toprule
%\emph{Measure} & $\hat{\mathcal{D}}$ & A$\hat{\mathcal{D}}$ & S$\hat{\mathcal{D}}$ & A$\hat{\mathcal{D}}(\alpha = 0.05)$ & A$\hat{\mathcal{D}}(\alpha = 0.1)$ & A$\hat{\mathcal{D}}(\alpha = 0.4)$ & A$\hat{\mathcal{D}}(\alpha = 0.5)$ \\
%\toprule
%$r^2$ & 114.6 (min) & 115.1 & 133.7 (max) & 121.1 & 120.2 & 116.2 & 115.5 \\
%MIC & 103.1 (min) & 106.9 & 131.8 (max) & 119.4 & 117.4 & 111.7 & 110.8 
%\end{tabular}
%\end{table*}

\subsection{Experiments with Gini gain in Random Forests.}
Splitting criteria are known to be biased towards variables induced on small $n$ or categorical with many categories. Standardized measures and $p$-values are the state-of-the-art strategy to solve this problem~\cite{Dobra01, Strobl2007Gini, Frank98, Romano2014}. However, we saw that standardized measures are unbiased in ranking only when the population value $\mathcal{D}(X,Y) = 0$, and the user might better tune the bias using the parameter $\alpha$. The optimal $\alpha$ can be found with cross-validation.

Here we use the expected value $E_0[\mbox{Gini}]$ and the variance $\mbox{Var}_0(\mbox{Gini})$ of Gini proposed in~\cite{Dobra01} to standardize Gini gain as per Definition~\ref{def:std} (See Appendix~\ref{app:nullgini} for more details). Moreover, we employ them to compute the adjusted Gini gain AGini$(\alpha)$ as follows:
\begin{restatable}{prop}{propginialpha}
The adjustment for ranking at level $\alpha \in (0,1]$ for \emph{Gini} gain is:
\[
\mbox{\emph{AGini}}(\mathcal{S}_n|X,Y)(\alpha) = \mbox{\emph{Gini}}(\mathcal{S}_n|X,Y) - \tilde{q}_0(1 - \alpha)
\]
where $\tilde{q}_0(1-\alpha)$ is an upper bound for the $(1-\alpha)$-quantile of \emph{Gini} gain equal to:
\[E[\text{\emph{Gini}}_0(\mathcal{S}_n|X,Y)]  + \sqrt{ \frac{1 - \alpha}{\alpha} \text{\emph{Var}}(\text{\emph{Gini}}_0(\mathcal{S}_n|X,Y))}.
\]
\end{restatable}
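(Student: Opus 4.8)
The plan is to recognize that the exact $(1-\alpha)$-quantile $q_0(1-\alpha)$ demanded by the Adjustment for Ranking is intractable for Gini gain, whose null distribution has no simple closed form, but that a distribution-free upper bound on that quantile can be built from only the first two moments $E[\text{Gini}_0(\mathcal{S}_n|X,Y)]$ and $\text{Var}(\text{Gini}_0(\mathcal{S}_n|X,Y))$, both of which are available in closed form from Dobra and Gehrke. The key tool is Cantelli's one-sided Chebyshev inequality: for any random variable $Z$ with mean $\mu$ and finite variance $\sigma^2$, and any $t>0$, one has $P(Z-\mu \ge t) \le \sigma^2/(\sigma^2+t^2)$.

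First I would instantiate the inequality with $Z = \text{Gini}_0(\mathcal{S}_n|X,Y)$, $\mu = E[\text{Gini}_0(\mathcal{S}_n|X,Y)]$, and $\sigma^2 = \text{Var}(\text{Gini}_0(\mathcal{S}_n|X,Y))$, then solve for the smallest deviation $t$ that forces the upper-tail probability to be at most $\alpha$. Imposing $\sigma^2/(\sigma^2+t^2) \le \alpha$ and rearranging yields $t^2 \ge \frac{1-\alpha}{\alpha}\sigma^2$, so the critical value is $t = \sqrt{\frac{1-\alpha}{\alpha}\,\text{Var}(\text{Gini}_0(\mathcal{S}_n|X,Y))}$. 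Substituting back gives $P\!\left(\text{Gini}_0 \ge E[\text{Gini}_0] + t\right) \le \alpha$, equivalently $P\!\left(\text{Gini}_0 \le \tilde{q}_0(1-\alpha)\right) \ge 1-\alpha$ with $\tilde{q}_0(1-\alpha) = E[\text{Gini}_0] + t$, which is exactly the stated expression. The final step translates this cumulative-probability statement into a quantile inequality: since the null CDF evaluated at $\tilde{q}_0(1-\alpha)$ is at least $1-\alpha$ and the true quantile is the infimum of points at which the CDF first reaches $1-\alpha$, monotonicity of the CDF gives $q_0(1-\alpha) \le \tilde{q}_0(1-\alpha)$. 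Thus $\tilde{q}_0(1-\alpha)$ is a valid upper bound, and inserting it into the Adjustment for Ranking definition produces the claimed AGini formula.

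I expect no deep obstacle: the two points requiring care are using the one-sided Cantelli bound rather than the symmetric two-sided Chebyshev inequality (since the adjustment penalizes only the upper tail of the estimate, and the one-sided version is strictly tighter for that purpose), and getting the direction of the CDF-to-quantile implication right so that the bound over-estimates rather than under-estimates the quantile. Crucially, no assumption on the shape of the null distribution of Gini gain is needed, which is precisely why Cantelli is the correct instrument when only the mean and variance of $\text{Gini}_0$ are known in closed form.
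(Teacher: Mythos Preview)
Your proposal is correct and follows essentially the same route as the paper: both apply Cantelli's one-sided inequality to $\text{Gini}_0$, solve for the deviation that caps the upper-tail probability at $\alpha$, and read off the quantile bound $\tilde q_0(1-\alpha)=\mu+\sqrt{(1-\alpha)/\alpha}\,\sigma$. If anything, your write-up is slightly more careful about the direction of the CDF-to-quantile implication than the paper's terse proof, which contains a couple of apparent typos (it sets $\lambda^2/(1+\lambda^2)=\alpha$ and writes $P(\cdot)\ge\alpha$ where $1-\alpha$ is intended).
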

\noindent The proof of this upper bound is proposed in the supplement~\ref{app:nullgini}. %Using the upper bound $\tilde{q}_0(1-\alpha)$ we are still able to tune the bias varying $\alpha$.

We compare WEKA random forests with Gini, SGini, and AGini($\alpha$) as splitting criteria. To our knowledge this is the first time SGini and AGini($\alpha$) are tested in random forest. The forest is built on 1,000 trees taking care of sampling data with no replacement (50\% training set records for each tree) to not introduce further biases towards categorical variables with many categories~\cite{Strobl2007}. We employed 17 UCI datasets and 2 datasets with many categorical variables studied in~\cite{Altmann2010}. The latter datasets are related to biological classification problems and some of the variables can take as many categories as the number of amino acids at a given site in a viral protein: e.g.\ in the HIV dataset,
% a site in the coreceptor that the virus uses to infect humans 
there exist variables which can take 21 possible values and induce splits of 21-cardinality in the trees. Table~\ref{tbl:rf} shows the AUC performance of random forest computed with 50 bootstrap 2-fold cross-validation using different splitting criteria.
All our adjustments improve on the AUC of the random forest built with Gini. We fixed $\alpha$ in AGini($\alpha$) to show that using a value of 0.05 or 0.1 on average increases the random forest's AUC: see Figure~\ref{fig:rfavg}.
\begin{figure}[h]
\centering
\includegraphics[scale=.8]{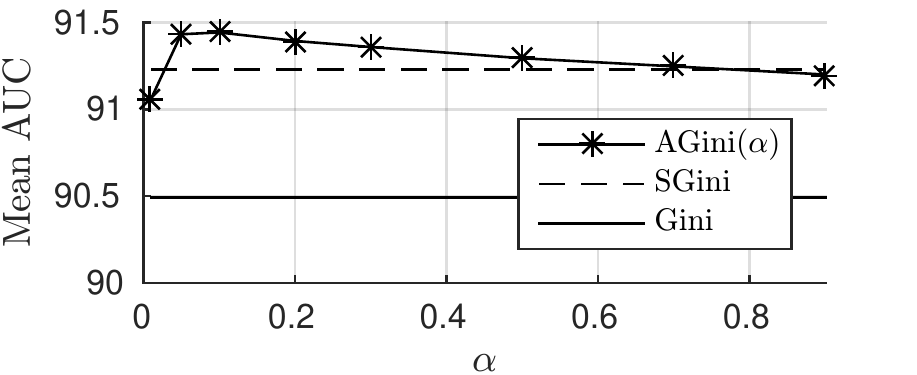}
\caption{AUC of random forest varying $\alpha$: with $\alpha = \{0.01,0.05\}$ it achieves the best results on average.}\label{fig:rfavg}
%\vspace{-1.5em} %cheating
\end{figure}
Moreover, we also tuned $\alpha$ with cross-validation for the best performance, where small and big $\alpha$ correspond to penalization of variables with big and small number of categories respectively. Indeed, the performance of random forests with AGini$(\alpha)$ with $\alpha$ tuned is statistically better than the one built with Gini according to the 1-sided Wilcoxon signed rank test: $p$-value $=0.0086$. Although the observed effect size is small, it was consistent, and there is no extra computational effort.
%Furthermore, the user could tune $\alpha$ on each dataset with cross-validation to either achieve penalization of high number of categories (small $\alpha$) simulating the behaviour of SGini, or penalization of small number of categories (big $\alpha$) simulating Gini for the best performance.
We strongly believe that adjusted splitting criteria are beneficial given that i) they can be plugged in random forests where Gini is currently used to improve classification accuracy on data sets with categorical variables or with missing values,
%(including random forest implementation for missing data datasets)
ii) they exhibit the same computational complexity as the Gini, and iii) they are easy to implement, in particular much easier than the estimation of their confidence interval with a possibilistic loss function proposed recently~\cite{Serrurier2015}.
\begin{table*}
\scriptsize
\centering
\caption{
%Random forest AUC using different splitting criteria. Either $(+)$, $(=)$, or $(-)$ means statistically greater, equal, or smaller according to the 1-sided paired $t$-test at level 0.01 than random forest AUC with AGini($\alpha = 0.05$).
%When pairing all AUCs results obtained by bootstrapping, \emph{random forest AUCs with} AGini($\alpha = 0.05$) \emph{are statistically greater than the competitors according to the 1-sided Wilcoxon signed-rank test at level 0.001.}
Random forest AUC using different splitting criteria. Either $(+)$, $(=)$, or $(-)$ means statistically greater, equal, or smaller according to the 1-sided paired $t$-test at level 0.05 than random forest AUC with Gini gain.
} \label{tbl:rf}
\begin{tabular}{p{1.8cm}HHR{1.7cm}R{1.1cm}R{1.9cm}rrR{1.6cm}R{1.6cm}R{1.6cm}}
\toprule
Dataset & perc missing & Min Cat & Variable with max number of categories & Number of classes & $m_{\textup{categorical}} + m_{\textup{continuous}} = m$ & $n$ & Gini  & SGini & AGini ($\alpha = 0.05$) & AGini($\alpha$) with $\alpha$ tuned \\
\toprule
Credit-g & 0.0\%  & 2 & 11 & 2 & $13+7=20$ & 1000 & 77.47 & \textbf{78.17} $(+)$ & 77.66 $(=)$ & 78.16 $(+)$ \\
australian & 0.0\%  & 2 & 14 & 2 & $8+6=14$ & 690 & 92.59 & 93.09 $(+)$ & 93.02 $(+)$ & \textbf{93.11} $(+)$ \\
bio-promoters & 0.0\%  & 4 & 4 & 2 & $57+0=57$ & 106 & 97.03 & 97.29 $(+)$ & 97.41 $(+)$ & \textbf{97.53} $(+)$ \\ 
flags & 0.0\%  & 2 & 14 & 8 & $26+2=28$ & 194 & 90.49 & 91.75 $(+)$ & 91.75 $(+)$ & \textbf{91.83} $(+)$ \\
kr-vs-kp & 0.0\%  & 2 & 3 & 2 & $36+0=36$ & 3196 & \textbf{99.86} & \textbf{99.86} $(=)$ & \textbf{99.86} $(=)$ & \textbf{99.86} $(=)$ \\ 
led7 & 0.0\%  & 2 & 2 & 10 & $7+0=7$ & 3200 & \textbf{94.18} & \textbf{94.18} $(=)$ & \textbf{94.18} $(=)$ & \textbf{94.18} $(=)$ \\ 
lymph & 0.0\%  & 2 & 8 & 4 & $15+3=18$ & 148 & 92.91 & \textbf{93.16} $(+)$ & 93.13 $(=)$ & 93.13 $(=)$ \\
mfeat-pixel & 0.0\%  & 5 & 7 & 10 & $240+0=240$ & 2000 & 99.58 & 99.63 $(+)$ & \textbf{99.64} $(+)$ & \textbf{99.64} $(+)$ \\
mito & 0.0\%  & 2 & 21 & 2 & $23+0=23$ & 175 & \textbf{79.32} & 79.28 $(=)$ & 79.26 $(=)$ & 79.10 $(=)$ \\
monks1 & 0.0\%  & 2 & 4 & 2 & $6+0=6$ & 556 & \textbf{99.96} & 99.85 $(-)$ & 97.38 $(-)$ & 99.78 $(-)$ \\
monks2 & 0.0\%  & 2 & 4 & 2 & $6+0=6$ & 601 & 64.86 & 70.89 $(+)$ & 77.83 $(+)$ & \textbf{80.72} $(+)$ \\
monks3 & 0.0\%  & 2 & 4 & 2 & $6+0=6$ & 554 & 98.73 & \textbf{98.74} $(=)$ & \textbf{98.74} $(=)$ & 98.73 $(=)$ \\
solar-flare & 0.0\%  & 2 & 6 & 6 & $11+0=11$ & 323 & 89.17 & \textbf{89.23} $(+)$ & 89.22 $(=)$ & \textbf{89.23} $(+)$ \\
splice & 0.0\%  & 4 & 6 & 3 & $60+0=60$ & 3190 & \textbf{99.52} & \textbf{99.52} $(=)$ & \textbf{ 99.52} $(=)$ & \textbf{99.52} $(=)$ \\
steel & 0.0\%  & 2 & 2 & 2 & $6+27=33$ & 1941 & \textbf{99.94} & 99.93 $(-)$ & 99.93 $(-)$ & \textbf{99.94} $(-)$ \\
tae & 0.0\%  & 2 & 2 & 3 & $2+3=5$ & 151 & 72.25 & 72.33 $(+)$ & 73.23 $(+)$ & \textbf{73.65} $(+)$ \\
tic-tac-toe & 0.0\%  & 3 & 3 & 2 & $9+0=9$ & 958 & 97.83 & 97.93 $(+)$ & \textbf{97.95} $(+)$ & 97.94 $(+)$ \\
\hline
c-to-u & 0.0\%  & 1 & 5 & 2 & $42+3=45$ & 2694 & \textbf{89.74} & 89.42 $(-)$ & 89.28 $(-)$ & 89.61 $(-)$ \\
HIV & 0.0\%  & 1 & 21 & 2 & $1030+0=1030$ & 355 & 84.08 & 89.27 $(+)$ & \textbf{89.58} $(+)$ & \textbf{ 89.58} $(+)$ \\
\hline
\multicolumn{8}{r}{$p$-value for the 1-tailed Wilcoxon signed rank test against random forest with Gini} & 
 0.0114 & 0.0295 & 0.0086 \\
 \hline
\end{tabular}
\vspace*{-1.5em} % Cheating
\end{table*}

\section{Conclusion}

In this paper we discussed how to adjust dependency measure estimates between two variables $X$ and $Y$ using the null hypothesis of their independence. This is particularly important to achieve \emph{interpretable quantification} of the amount of dependency. For this task, we proposed the quantification adjusted measures $\mbox{A}r^2$ and AMIC. However, quantification adjustment is not enough to achieve \emph{accurate ranking} of dependencies. In particular, it is very difficult to achieve ranking unbiasedness. In this task, the user should explore the possible rankings obtained with standardized and ranking adjusted measures, varying the parameter $\alpha$. We demonstrated that our S$r^2$, A$r^2(\alpha)$, SMIC, and AMIC$(\alpha)$ can be used to obtain more meaningful rankings, and that AGini$(\alpha)$ yields higher accuracy in random forests. The code for our measures, experiments, and supplementary material have been made available online\footnote{\url{https://sites.google.com/site/adjdep/}}. 
\vspace{-1em} %cheating
% conference papers do not normally have an appendix

% use section* for acknowledgement
\subsection*{Acknowledgments:}
Supported by AWS in Education Grant Award and ARC FT110100112.
\vspace{-1em} %cheating

\bibliographystyle{IEEEtran}
\bibliography{mandb_SDM16}

\clearpage

\appendix

\twocolumn[
\centering
{\bfseries \huge Supplementary Material}
\vspace{2em}
]
\section{Dependency Measure Estimators} \label{app:def}

Here we formally define the dependency measure estimator of Gini gain between two categorical variables $X$ and $Y$ and the Maximal Information Coefficient (MIC)~\cite{Reshef2011} on a sample $\mathcal{S}_n$.

\noindent
{\bfseries Gini gain and mutual information (a.k.a. information gain)}

Let $X$ be a categorical variable with $r$ possible values, then $n_i^X$ with $i=1,\dots,r$ is the count of records with value $i$ for $X$ in the sample $\mathcal{S}_n$. Similarly, let $Y$ be a categorical variable with $c$ possible values, $n_j^Y$ with $j=1,\dots,c$ is the count of records with value $j$ for $Y$. Finally, the count of records for the pairs that associate the values $i$ for $X$ and $j$ for $Y$ is denoted as $n_{ij}$. Gini gain are estimated on the empirical probabilities $\frac{n_{ij}}{n}$, $\frac{n_i^X}{n}$, and $\frac{n_j^Y}{n}$ which can be stored in a contingency table:
\begin{figure}[h]
\centering
\begin{tabular}{c|c|ccccc|}
\multicolumn{2}{c}{} & \multicolumn{5}{c}{  $Y$     }\\
\cline{3-7}
\multicolumn{2}{c|}{ } & $n_1^Y$ & $\cdots$ & $n_j^Y$ & $\cdots$ & $n_c^Y$ \\
\cline{2-7}
\multirow{2}{*}{     }  & $n_1^X$ &
$n_{11}$ & $\cdots$ & $\cdot$ & $\cdots$ & $n_{1c}$ \\
& $\vdots$ &
$\vdots$ &  & $\vdots$ &  & $\vdots$ \\
 $X$  & $n_i^X$ &
$\cdot$ &  & $n_{ij}$ & & $\cdot$ \\
& $\vdots$ &
$\vdots$ & & $\vdots$ & & $\vdots$ \\
& $n_r^X$ &
$n_{r1}$ & $\cdots$ & $\cdot$ & $\cdots$ & $n_{rc}$ \\
\cline{2-7}
\multicolumn{7}{c}{      }\\
\end{tabular}
\caption{$r \times c$ contingency table that stores the bivariate frequency distribution of $X$ and $Y$ for the sample $\mathcal{S}_n$.}
\label{fig:contingency}
\end{figure}

\noindent Gini gain is defined as:
\begin{equation} \label{eq:gini}
\mbox{Gini}(\mathcal{S}_n|X,Y) \triangleq 1 - \sum_{j=1}^c \Big(\frac{n_j^Y}{n}\Big)^2 - \sum_{i=1}^{r} \frac{n_i^X}{n} \Big( 1 - \sum_{i=j}^{c}\Big(\frac{n_{ij}}{n_i^X}\Big)^2\Big)
\end{equation}
Instead, the mutual information (MI) between $X$ and $Y$ is defined as: 
\begin{equation}
\mbox{MI}(\mathcal{S}_n|X,Y) \triangleq \sum_{i=1}^r \sum_{j=1}^c \frac{n_{ij}}{n} \log_2{ \frac{n_{ij} \cdot n }{n_i^X n_j^Y} }
\end{equation}

\noindent
{\bfseries Maximal Information Coefficient (MIC)}

Given a sample $\mathcal{S}_n$ from the continuous variables $X$ and $Y$, MIC is the maximal normalized MI computed across all the possible $r \times c$ grids to estimate the bivariate frequency distribution of $X$ and $Y$. Each $r \times c$ discretizes the scatter plot of $X$ and $Y$ in $r\cdot c$ bins to compute their frequency distribution:
\begin{equation}
\mbox{MIC}(\mathcal{S}_n|X,Y) \triangleq \max_{r \times c \mbox{ \tiny grids with } r \cdot c \leq n^{a}} \frac{\mbox{MI}(\mathcal{S}_n|X,Y)}{ \log_2{ \min{ \{ r,c\} }} } 
\end{equation} 
where $a$ is a parameter often set to $0.6$~\cite{Reshef2011}.

\subsection{Distribution of MIC under the null.} \label{app:nullmic}

The distribution of $\mbox{MIC}_0(\mathcal{S}_n|X,Y)$ can be computed using $s=1,\dots,S$ Monte Carlo instances $\mbox{MIC}^{(s)}_0$ of MIC computed on the sample $\mathcal{S}^0_n = \{ (x_{\sigma_x(k)}, y_{\sigma_y(k)}) \}$ obtained by permutations of the sample $\mathcal{S}_n$: $\sigma_x(k)$ and $\sigma_y(k)$ are the permuted indexes of the points $x_k$ and $y_k$ respectively. The expected value of MIC under the null can be estimated with:
\begin{equation}
\mbox{EMIC}_0 = \frac{1}{S}\sum_{s=1}^S \mbox{MIC}^{(s)}_0
\end{equation}
The standard deviation of MIC under the null can be estimated with:
\begin{equation}
\mbox{SDMIC}_0 = \sqrt{\frac{1}{S -1} \sum_{s=1}^S (\mbox{MIC}^{(s)}_0 - \mbox{EMIC}_0)^2}
\end{equation}

\subsection{Distribution of Gini gain under the null.} \label{app:nullgini}

The analytical distribution of Gini gain in Eq.~\eqref{eq:gini} is difficult to compute. Nonetheless, it is possible to compute its expected value and variance. According~\cite{Dobra01} the expected value of Gini gain under the null using the multinomial model is:
\begin{equation}
E[\mbox{Gini}_0(\mathcal{S}_n|X,Y)] = \frac{r -1}{n} \Big( 1 - \sum_{j=1}^c\Big(\frac{n_j^Y}{n}\Big)^2\Big)
\end{equation}
and the variance $\mbox{Var}[\mbox{Gini}_0(\mathcal{S}_n|X,Y))$ is:
\begin{align}
\frac{1}{n^2} \Bigg[ (r -1) \Big( 2  \sum_{j=1}^c\Big(\frac{n_j^Y}{n}\Big)^2 +2 \Big(\sum_{j=1}^c\Big(\frac{n_j^Y}{n}\Big)^2\Big)^2 - 4 \sum_{j=1}^c\Big(\frac{n_j^Y}{n}\Big)^3 \Big)\\\nonumber
+ \Big( \sum_{i=1}^r \frac{1}{n_i^X} - 2 \frac{r}{n} + \frac{1}{n}\Big) \times \\ \nonumber
\Big( -2\sum_{j=1}^c\Big(\frac{n_j^Y}{n}\Big)^2 -6 \Big(\sum_{j=1}^c\Big(\frac{n_j^Y}{n}\Big)^2\Big)^2 + 8 \sum_{j=1}^c\Big(\frac{n_j^Y}{n}\Big)^3\Big)\Bigg]
\end{align}
We can compute the $(1-\alpha)$-quantile of Gini gain under the null using its expected value and variance:

\propginialpha*
\begin{proof}
Let $\mu$ and $\sigma$ be the expected value and standard deviation respectively. We apply the Cantelli's inequality to find an upper bound for $q_0(1-\alpha)$: $P (\mbox{Gini} \leq \mu +  \lambda \sigma ) \geq \frac{\lambda^2}{1 + \lambda^2}$ for $\lambda \geq 0$. If we set $\frac{\lambda^2}{1 + \lambda^2} = \alpha$ then $P( \mbox{Gini} \leq \mu  + \sqrt{ \frac{1 - \alpha}{\alpha} } \sigma ) \geq \alpha$. This implies $q_0(1-\alpha) \leq \mu  + \sqrt{ \frac{1 - \alpha}{\alpha}} \sigma $. \hfill $\square$
\end{proof}

\end{document}